\newtheorem{prop}{Proposition}
\title{Streaming Active Deep Forest for Evolving Data Stream Classification}
\author{
    Vu Anh Luong
    \affiliations
    ?, Griffith University, Australia
    \emails
    vu.luong@griffithuni.edu.au
}
\author{
	Anh Vu Luong$^1$
	\and
	Tien Thanh Nguyen$^2$\And
	Alan Wee-Chung Liew$^{1}$
	\affiliations
	$^1$School of Information and Communication Technology, Griffith University, Australia\\
	$^2$School of Computing Science and Digital Media, Robert Gordon University, Aberdeen, Scotland, UK\\
	\emails
	%\{first, second\}@example.com,
	vu.luong@griffithuni.edu.au
}
\begin{document}

\maketitle

\begin{abstract}
  In recent years, Deep Neural Networks (DNNs) have gained progressive momentum in many areas of machine learning. The layer-by-layer process of DNNs has inspired the development of many deep models, including deep ensembles. The most notable deep ensemble-based model is Deep Forest, which can achieve highly competitive performance while having much fewer hyper-parameters comparing to DNNs. In spite of its huge success in the batch learning setting, no effort has been made to adapt Deep Forest to the context of evolving data streams. In this work, we introduce the Streaming Deep Forest (SDF) algorithm, a high-performance deep ensemble method specially adapted to stream classification. We also present the Augmented Variable Uncertainty (AVU) active learning strategy to reduce the labeling cost in the streaming context. We compare the proposed methods to state-of-the-art streaming algorithms in a wide range of datasets. The results show that by following the AVU active learning strategy, SDF with only 70\% of labeling budget significantly outperforms other methods trained with all instances.
\end{abstract}

\section{Introduction}

Recent years have witnessed a remarkable success of Deep Neural Networks (DNNs) \cite{lecun2015deep} in various domains, including images, videos, audios, and text processing tasks. Though DNNs are extremely powerful, they have some limitations: (1) they require a huge amount of labeled data to achieve high performance; (2) training them is hard and slow with an enormous number of parameters; (3) their effectiveness highly depends on careful hyper-parameters tuning for different tasks. These problems are even more severe when applying DNNs to the online setting, where the model cannot reaccess historical data, leading to its slow convergence.

The success of DNNs is commonly attributed to its representation learning capability, which mainly relies on layer-by-layer processing of the feature information. This recognition inspired the emergence of deep ensemble methods, most notably the gcForest model \cite{zhou2017deep}, which can solve the above-mentioned problems of DNNs while keeping the representation learning ability and producing high prediction accuracy. In particular, gcForest has much fewer hyper-parameters in comparison to DNNs and can achieve better results across various domains when using the same setting.

Data stream mining has become increasingly important in recent years owing to the massive amount of real-time data generated by sensor networks, IoT devices, and system logs. Building a strong predictive model for data streams is, therefore, a crucial task for many applications. Unlike in traditional batch classification where we can store the entire dataset in memory and process them with unlimited time, here we consider the evolving data stream setting where the following learning paradigms and resource constraints need to be satisfied: (1) the model is ready to classify any sequentially arriving instances at any time; (2) we expect an infinite sequence of data processed under limited time and memory; (3) the data distribution may change over time (the appearance of concept drift \cite{webb2016characterizing}); (4) the model can only observe each instance once before discarding it.

In learning from evolving data streams, the labeling process may incur high costs and may require a great deal of human effort. Active learning studies how to wisely query the most informative instances instead of asking for all labels. An effective active learning strategy can save us a huge number of label requests while keeping the performance of the learner as high as possible. It also helps accelerate the learning process since the learner will be trained on fewer instances. 

In this work, we introduce a novel active classification method for evolving data streams. First, we present Streaming Deep Forest (SDF), which is an adaptation of the gcForest model for the stream setting. SDF retains the representation learning ability of gcForest by reusing its cascade structure. To update the model on the fly, we replace the classic Random Forest \cite{breiman2001random} at each layer by Adaptive Random Forest (ARF) \cite{gomes2017adaptive}, a high-performance forest model for the stream setting. Concerning the problem of concept drift, SDF incorporates an active drift detection strategy. More details of SDF is described in Section \ref{proposed_method}. Second, we enhance the Variable Uncertainty (VU) strategy \cite{vzliobaite2011active} to obtain a novel active learning method, namely Augmented Variable Uncertainty (AVU). We provide a theoretical proof that the VU strategy does not take the full advantage of the given budget, and then we propose AVU strategy to tackle this issue. Our contributions in this work are summarized as follows:

1) Streaming Deep Forest (SDF): We introduce a deep ensemble method, namely SDF, that achieves high prediction accuracy by exploiting the layer-by-layer processing of raw features. To the best of our knowledge, SDF is the first deep ensemble model being used under the data stream setting.

2) Augmented Variable Uncertainty (AVU) active learning strategy: We theoretically show a problem of the Variable Uncertainty (VU) strategy that it does not make full use of the given budget, and propose the AVU strategy to fix that issue.

3) Empirical analysis: We compare the proposed methods with a number of state-of-the-art algorithms for streaming context concerning a wide range of datasets.  The experiment results show that by following the AVU strategy, SDF significantly outperforms all the benchmark algorithms even when it uses only 70\% of the labeling budget.

In the next sections, we will discuss the background and related work (Section \ref{bg_rw}), followed by the proposed methods (Section \ref{proposed_method}) and experiments (Section \ref{exp}). Finally, Section \ref{concl} concludes this work and presents directions for future works.

\section{Background and Related Work} \label{bg_rw}
\subsection{Ensemble Methods and Deep Ensemble Methods for batch learning}

A multitude of ensemble systems are widely used in the traditional batch learning setting, including Bagging \cite{breiman1996bagging}, Boosting \cite{freund1997decision}, Random Subspace \cite{barandiaran1998random}, and Random Forest \cite{breiman2001random}. These methods are different in how they generate diversity in the ensemble. Bagging, for example, trains base learners on different bootstrap replicates obtained by using sampling with replacement of the training set. Meanwhile, Random Subspace pays attention to the feature space by training each base learner on a randomly selected subset of features. Random Forest extends Bagging by using Decision Trees as its base learners and choosing a random subset of features to be used for splits in each base tree.

Recently, the first ensemble-based deep model has been introduced, namely gcForest \cite{zhou2017deep}. It was constructed using multiple layers, each of which contains two Completely-Random Tree Forests and two Random Forests \cite{breiman2001random}. In detail, each forest in a layer outputs a class vector obtained by averaging the class distribution vectors of all the base decision trees. Then a concatenation of the original feature vector and four class vectors returned by four random forests is used as the input data for the next layer. The gcForest model achieves superior performance on a wide range of domains in comparison to DNNs and other ensemble algorithms. More importantly, gcForest has much fewer hyper-parameters than DNNs and performs robustly on various datasets by using the same parameter setting.

\subsection{State-of-the-art methods for evolving data streams} \label{sub22}

There are a massive number of methods for data stream classification. Here we only consider state-of-the-art algorithms according to their prediction performance and flexibility.

Almost all the strongest models for evolving data streams are ensemble-based methods, because they can handle concept drifts effectively by selectively removing or adding base learners when changes happen. Online Bagging \cite{oza2005online} is an adapted replicate of the classical Bagging algorithm, in which the Poisson(1) distribution is employed to simulate the behavior of bootstrap technique in an online manner. Leveraging Bagging \cite{bifet2010leveraging} enhances Online Bagging by adding more randomization to the input and output of the base learners and employing the ADaptive WINdow (ADWIN) drift detection algorithm \cite{bifet2007learning} to selectively reset the base models whenever concept drift occurs. Chen et al. proposed an online version of Smooth Boost \cite{servedio2003smooth}, namely Online Smooth Boost (OSBoost) \cite{chen2012online}, which aims to generate only smooth distributions that do not assign too much weight to a single instance. It is theoretically guaranteed that OSBoost can achieve arbitrarily small error rate as long as the number of weak learners and instances are sufficiently large. Adaptive Random Forest (ARF) \cite{gomes2017adaptive} aims to adapt the classical Random Forest to the data stream setting by employing the online bootstrap resampling, similar to Leveraging Bagging. To deal with concept drift, ARF uses two change detectors per base tree to detect warnings and drifts. In particular, when a warning is triggered, a background tree is created and updated without affecting the ensemble predictions. If the warning escalates to a drift after a period of time, the background tree replaces the corresponding base tree in the ensemble. Recently, Gomes et al. introduced Streaming Random Patches (SRP) \cite{gomes2019}, which resembles the classic Random Patches \cite{louppe2012ensembles} by combining the Random Subspace method and Online Bagging \cite{oza2005online}. SRP exploits the global subspace randomization (as in Random Subspace), while ARF takes advantage of local subspace randomization (as in Random Forest). In SRP, the drift detection and recovery strategy follows the procedure used in ARF.

An Online Deep Learning (ODL) framework has been proposed recently \cite{sahoo2017online}, which employs Hedge Backpropagation to overcome the slow convergence issue of DNNs in the online setting. However, ODL has no explicit mechanism to deal with changes in the data distribution, resulting in poor performance when concept drift occurs.

%We can see that the above state-of-the-art ensemble methods for evolving data streams exploit strategies developed originally for the batch setting. However, no effort has been made to adapt the gcForest to data stream setting in spite of its notable success in batch data learning.

\subsection{Active Learning with evolving data streams} \label{bg_al}

In active learning setting for streaming data, the decision to request the true label for a data point must be made immediately when that instance arrives. Only a few active learning strategies have been proposed for evolving data streams. In \cite{vzliobaite2011active}, the first theoretically supported active learning framework for instance-incremental streaming data was introduced. The authors also proposed two novel active learning strategies, namely Variable Uncertainty (VU) and Variable Randomized Uncertainty (VRU), that can handle concept drift explicitly. The VU strategy employs a variable threshold, which adjusts itself based on arriving data points to align with the given budget. Meanwhile, in the VRU strategy, the labeling threshold is multiplied by a random variable that follows the normal distribution $ \mathcal{N}(0,1) $. This strategy labels the data points that are close to the decision boundary more often, but occasionally requests labels for some distant instances. Cesa-Bianchi et al. developed an online active learning method, namely Selective Sampling (SS) \cite{cesa2006worst}, using a variable labeling threshold $ b/(b + |p|) $, where $ b $ is a parameter, and $ p $ is the prediction of the perceptron. This method could be adapted to changes, although the authors did not explicitly handle concept drift. Xu et al. employed a Paired  Ensemble Framework to perform active learning from evolving data streams \cite{xu2016active}. In detail, an ensemble of two base learners is used to predict new instances and detect changes over time. Meanwhile, two active learning strategies (Random strategy and Variable Uncertainty strategy) work alternatively to look for the most informative instances.

\begin{figure}[t!] 
	\centering
	\includegraphics[width=\columnwidth]{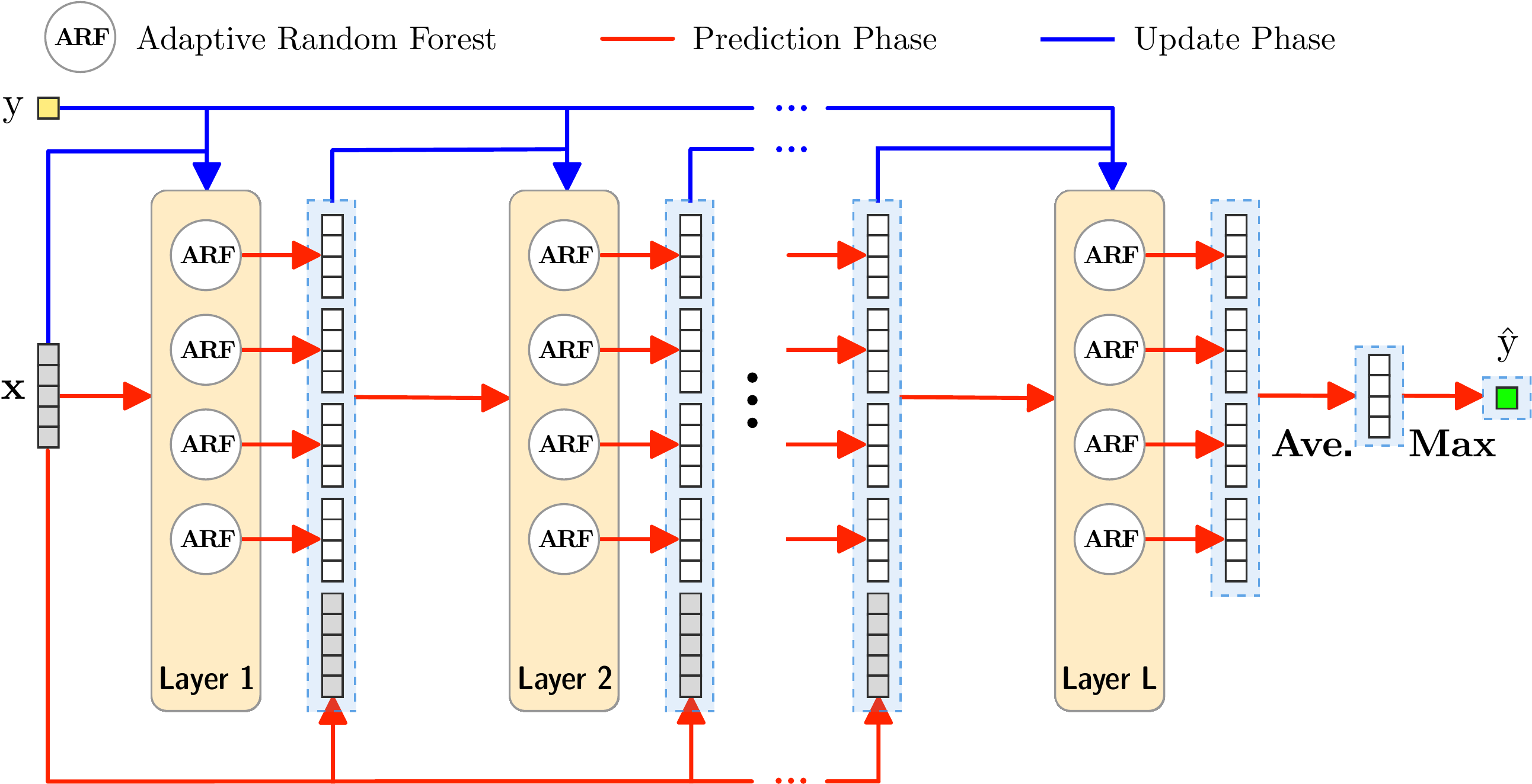}
	\caption{Streaming Deep Forest}
	\label{h1}
\end{figure}

\section{Proposed Methods} \label{proposed_method}
\subsection{Problem setting}

Let a data stream $ X=\{x_1,x_2,…,x_\infty\} $ be an infinite sequence of data points where $ x_k $ is a $ d $-dimensional vector of features. Correspondingly, let $ Y=\{y_1,y_2,…,y_\infty\} , y_k \in \{l_1, l_2,…, l_M\} $ be the sequence of class labels, such that an entry $ y_k $ in $ Y $ is the true label of $ x_k $ in $ X $. Most of the existing works on data stream classification assume that the true label $ y_k $ is available before the next data point $ x_{k+1} $ arrives.

Furthermore, we assume evolving data streams, in which concept drifts may occur over time. The appearance of concept drifts influences the decision boundary and damages the current learned model. Here, an i.i.d. assumption is made for each concept, i.e., each concept is treated as a separate i.i.d. stream. As a result, we have to deal with a series of i.i.d. streams.

%\begin{figure}[t!] 
%	\centering
%	\includegraphics[width=\columnwidth]{}
%	\caption{Adaptive Random Forest}
%	\label{h2}
%\end{figure}

\subsection{Streaming Deep Forest}

It is widely acknowledged that the success of deep neural networks is attributed to its representation learning ability, which mostly relies on layer-by-layer processing of the features. Similarly, gcForest \cite{zhou2017deep} generates a deep forest ensemble based on cascade structure to perform representation learning. Specifically, each layer of gcForest takes the output of its previous layer as the feature information and transmits its processing result to the next layer.

Under the data stream setting, we employ the cascade structure of gcForest to retain the representation learning capability. However, we change the constituents and propose a new training scheme to make the model able to learn incrementally from data streams. Figure \ref{h1} illustrates the proposed method, which we refer to as Streaming Deep Forest (SDF).

Each layer is represented by an ensemble of Adaptive Random Forests (ARF), i.e., an ensemble of ensembles. Due to the fact that ARF is an online classifier, we can update all layers on the fly and use them to make predictions at any time. In addition, to promote diversity, a crucial factor in ensemble learning \cite{zhou2012ensemble}, we construct each layer using four ARFs with different hyper-parameters.

Consider an arbitrary data point, each ARF will output an estimate of the posterior distribution, which is the weighted average across all trees’ class distribution. In more detail, ARF employs the Hoeffding Tree \cite{domingos2000mining} algorithm with Naïve Bayes classifier at the leaves as the base learner, which we call Hoeffding Naïve Bayes Tree (HNBT). Note that in ARF, each base tree limits its splits to $ m (m < d) $ randomly selected features.

The posterior distribution given by each ARF forms a class vector. We then concatenate all these class vectors and the original feature vector to input to the next layer. Lets take a problem that aims to classify 5D feature vectors into four classes as an example. In this case, each of four ARFs outputs a four-dimensional class vector; thus, the next layer will receive $ 21(=4\times4+5) $ features.

When the true label $ y_k $ of the data point $ x_k $ is revealed, we update each layer by using $ y_k $ and the input vector (the concatenation of the previous layer’s outputs and $ x_k $). In each layer, the update process can be easily parallelized since the four ARFs are independently executed, and so are the base trees of each ARF. In this work, we use the CPU multi-processor architecture to parallelize each layer of SDF. % Note that we do not need to worry about the risk of overfitting in this online update scheme. The first reason is that each layer makes prediction for $ x_k $ before observing the true label $ y_k $. Another reason is that SDF only uses $ x_k $ and $ y_k $ once before discarding them.

Regarding the issue of concept drift, SDF follows the active change detection and recovery strategy used in ARF, which is described in Sub-section \ref{sub22}
%Regarding the issue of concept drift, SDF uses two change detectors per ARF's base tree to detect warnings and drifts. In detail, a background tree is created and updated without affecting the ensemble predictions when a warning is triggered. This background tree then replaces the corresponding active tree in the ensemble if the warning escalates to a drift after a period of time.

%\begin{table}[]
%	\centering
%	\caption{ARF configurations used in each layer}
%	\label{table_layer}
%	\resizebox{\columnwidth}{!}{%
%		\begin{tabular}{lccc}
%			\hline
%			Model & Base trees & Number of features per tree & Random seed \\ \hline
%			ARF & 50 HNBT   & $ m=\sqrt{d} + 1 $                        & 1           \\
%			ARF & 50 HNBT   & $ m=\sqrt{d} + 1 $                        & 2           \\
%			ARF & 50 HNBT   & $ m=d-(\sqrt{d} + 1) $                        & 3           \\
%			ARF & 50 HNBT   & $ m=1 $                        & 4           \\ \hline
%		\end{tabular}
%	}\\
%	\rule{0pt}{1.5ex}
%	{\raggedright \tiny $ {\ \ } $(HNBT) Hoeffding Naive Bayes Trees, (ARF) Adaptive Random Forest\par}
%\end{table}

\subsection{Augmented Variable Uncertainty Strategy}

We study active learning for instance-incremental streaming data, where concept drift is expected to occur. The true label can be requested immediately or never, as the data points are discarded from memory after being used. The goal is to maximize the prediction accuracy over time while keeping the labeling cost fixed within an allocated budget.

Given a data stream $ X=\{x_1,x_2,…,x_\infty\} $, we assume that the labeling cost is the same for any data point. A budget $ B $ is imposed to request the true labels, i.e., the maximum fraction of the incoming data points that we can obtain the true labels. If $ B=1 $, for example, all arriving data points are labeled, whereas if $ B=0.6 $, we can request the true labels of up to 60\% of the arriving data points.

\begin{algorithm}[tb]
	%\algsetup{linenosize=\tiny}
	\caption{Active Learning with the AVU strategy}
	\label{algo1}
	\textbf{Input}: $ x_k $ - incoming instance, $ B $ - budget, $ s $ - adjusting step \\
	\textbf{Output}: $ label \in \{\textbf{true}, \textbf{false}\} $ specifies whether to query the true label $ y_k $
	\begin{algorithmic}[1] %[1] enables line numbers
		\STATE Initialize labeling cost $ c = 0 $, labeling threshold $ \theta = 1 $
		\IF [\textit{budget is not exceeded}]{$ (c/ k < B) $} 
		\STATE $ \hat{p}=\max_{y} P(y|x_k), y \in \{l_1, l_2,...,l_M\} $
		\IF [\textit{certainty below the threshold}]{$ \hat{p} < \theta $}
		\STATE $ c = c + 1 $; $ \theta = \theta(1 - s) $
		\RETURN \textbf{true}
		\ELSE [\textit{certainty is good}]
		\STATE $ \theta = \theta(1 + s) $
		\STATE Generate a uniform random variable $ \rho \in [0,1] $
		\RETURN $ \rho < 2\times (B - 0.5) $
		\ENDIF
		\ELSE [\textit{budget is exceeded}]
		\RETURN \textbf{false}
		\ENDIF
	\end{algorithmic}
\end{algorithm}

In this work, we improve the Variable Uncertainty (VU) strategy \cite{vzliobaite2011active}. Here, the certainty is measured by using the posterior probability estimates, i.e., the higher the maximum of the posterior probabilities is, the more certain the prediction is. This strategy tries to label the least certain data points within a time period by using a variable certainty threshold, which adapts itself according to the arriving instances. Specifically, VU queries labels for the instances with their certainty scores below the variable threshold. In stable data concept (no change happens), the classifier becomes more confident about its predictions; thus, the certainty threshold will grow to cover some high-certainty data points. By contrast, if a concept drift occurs and lots of labeling requests suddenly appear, then the certainty threshold is contracted to be able to query labels for the most uncertain data points first.

A problem with the VU strategy is that it does not take full advantage of the given budget $ B $. In Proposition \ref{prop1}, we show that by following this strategy, we only spend a maximum budget of 0.5 in expectation. As a consequence, when the budget $ B>0.5 $, it will miss out a fraction of about $ (B-0.5) $ of incoming instances that we can ask for their labels. To address this issue, we proposed the Augmented Variable Uncertainty (AVU) strategy in Algo \ref{algo1}. The difference between AVU and VU is that when ``the certainty is good", VU always refuses to query labels, whereas AVU requests labels with a probability $ P = 2\times (B - 0.5) $. This allows AVU to take the full advantage of the labeling budget $ B $.

%From Proposition 1, the probability P(uncertainty below the threshold) approaches 0.5; thus, the probability P(certainty is good) also approaches 0.5. To make the most of the budget B, we need to request true labels even when the certainty is good. Let $ U_1 $ be the event “uncertainty below the threshold”, $ U_2 $ be “uncertainty is good”, and $ U_3 $ be “request true labels when uncertainty is good”, we want:
%\begin{equation}
%P(U_1 )+ P(U_2 )\times P(U_3 )=B
%\end{equation}
%which is satisfied when $ P(U_3 )=2\times (B-0.5) $.

Requesting labels when “certainty is good” is beneficial in evolving data streams, as changes can happen everywhere in the instance space. Thus, if we refuse to query labels for certain data points, some regions will never be observed, and we never know that concept drifts are occurring in those regions and, therefore, never adapt.

\begin{table}[]
	\caption{Datasets used in the experiments}
	\label{table_data}
	\resizebox{\columnwidth}{!}{%
		\begin{tabular}{lrcccc}
			\hline
			\textbf{Dataset}          & \textbf{\# Instances} & \textbf{\# Classes} & \textbf{\# Features} & \textbf{Type}      & \textbf{Drifts} \\ \hline
			Airlines         & 539,383                                                   & 2                                                         & 7                                                       & Real      & -      \\
			Covtype          & 581,012                                                   & 7                                                         & 54                                                      & Real      & -      \\
			Adult            & 48,842                                                    & 2                                                         & 14                                                      & Real      & -      \\
			Electricity      & 45,312                                                    & 2                                                         & 8                                                       & Real      & -      \\
			KDDCup99         & 4,898,431                                                 & 23                                                        & 41                                                      & Real      & -      \\
			Mnist\_a         & 70,000                                                    & 10                                                        & 784                                                     & Real      & A      \\
			Nomao            & 34,465                                                    & 2                                                         & 118                                                     & Real      & -      \\
			Vehicle          & 98,528                                                    & 2                                                         & 100                                                     & Real      & -      \\
			20\_newsgroups   & 399,940                                                   & 2                                                         & 1000                                                    & Real      & -      \\
			AGR\_a           & 1,000,000                                                 & 2                                                         & 9                                                       & Synthetic & A      \\
			AGR\_g           & 1,000,000                                                 & 2                                                         & 9                                                       & Synthetic & G      \\
			BNG\_tic-tac-toe & 39,366                                                    & 2                                                         & 9                                                       & Synthetic & N      \\
			BNG\_vote        & 131,072                                                   & 2                                                         & 16                                                      & Synthetic & N      \\
			BNG\_segment     & 1,000,000                                                 & 7                                                         & 19                                                      & Synthetic & N      \\
			HYPER            & 1,000,000                                                 & 2                                                         & 10                                                      & Synthetic & F      \\
			RBF\_f           & 1,000,000                                                 & 5                                                         & 10                                                      & Synthetic & F      \\
			RBF\_m           & 1,000,000                                                 & 5                                                         & 10                                                      & Synthetic & M      \\
			RTG              & 1,000,000                                                 & 2                                                         & 10                                                      & Synthetic & N      \\
			SEA\_a           & 1,000,000                                                 & 2                                                         & 3                                                       & Synthetic & A      \\
			SEA\_g           & 1,000,000                                                 & 2                                                         & 3                                                       & Synthetic & G      \\ \hline
		\end{tabular}
		
	}
	\rule{0pt}{1.5ex}
	{\tiny $ {\ \ } $ (A) Abrupt, (G) Gradual, (M) Incremental (moderate), (F) Incremental (fast), and (N) No drift.}
\end{table}

\section{Experiments} \label{exp}

We compared the parallel implementation of SDF against state-of-the-art algorithms for evolving data streams, both concerning prediction accuracy and CPU run time. We used the test-then-train strategy, where each instance is first used for testing and then for training, to evaluate the accuracy of each classification method. The benchmark algorithms used in the comparison were the Online Deep Learning (ODL) framework, Leverage Bagging (LB), Online Smooth Boosting (OSB), Adaptive Random Forest (ARF), and Streaming Random Patches (SRP). These are recently proposed methods that consistently outperform other classifiers, as shown by experiments in the literature \cite{gomes2017adaptive,gomes2019,sahoo2017online}.

To evaluate the proposed active learning method AVU, we compared it to three techniques: Variable Uncertainty (VU), Variable Randomized Uncertainty (VRU), and Selective Sampling (SS). The ideas of these methods are briefly discussed in Sub-section \ref{bg_al}, and the implementations of them are available in the MOA library\footnote{\url{https://moa.cms.waikato.ac.nz}}.

Regarding the hyper-parameters, we used Hoeffding Tree (HT) as the base classifier for all ensemble-based methods. The number of layers of SDF was set to 3 when comparing to other benchmarks. Each layer contained 4 ARFs, each of which comprised of 50 base trees. We, therefore, used 200 HTs as the base learners for other ensemble algorithms. We employed ADWIN to be the drift detector for all ensemble methods that rely on active drift detection (i.e., SDF, ARF, SRP, LB). We also incorporated ADWIN to ODL to help it deal with concept drift. The confidence bound $ \delta $ of ADWIN was set to $ \delta=10^{-4} $ for warning detection and $ \delta=10^{-5} $ for drift detection in SDF, ARF, and SRP. In LB and ODL, $ \delta $ was set to its default value $ \delta=0.002 $. In the AVU, VU, and VRU active learning strategies, the adjusting step $ s $ was set to $ s=0.01 $ as used in \cite{vzliobaite2011active}. When comparing to other benchmark algorithms, we added a variant of SDF that follows AVU active learning strategy with budget $ B = 0.7 $, which we refer to as SDF(B=0.7). Other hyper-parameters that are not mentioned here were set to their default values, as shown in the original papers, and they can also be found in the MOA library. 

We conducted experiments on 20 datasets, including 11 synthetic data streams and 9 real-world datasets. These datasets have been extensively used in the data stream literature, containing concept drifts (gradual, abrupt, and incremental) and stationary streams. More details of these datasets are shown in Table \ref{table_data}.

\begin{table}[] 
	\centering
	\caption{Test-then-train accuracy(\%)}
	\label{table_accuracy}
	\resizebox{\columnwidth}{!}{%
		\begin{tabular}{lccccccc}
			\hline
			\multicolumn{1}{c}{\textbf{}} & \textbf{ARF} & \textbf{SRP}     & \textbf{LB} & \textbf{OSB} & \textbf{ODL}     & \textbf{SDF}     & \textbf{\begin{tabular}[c]{@{}c@{}}SDF\\ (B=0.7)\end{tabular}} \\ \hline
			Airlines                      & 66.4646      & 68.4972          & 63.7109     & 65.0028      & 61.3000          & 68.4934          & \textbf{68.5680}                                               \\
			Covtype                       & 92.5220      & 94.8010          & 93.5106     & 87.2538      & 89.5800          & \textbf{95.7142} & 95.6591                                                        \\
			Adult                         & 83.9503      & 84.6485          & 84.2554     & 83.4200      & 76.0700          & 84.6812          & \textbf{84.7426}                                               \\
			Electricity                   & 89.0228      & 89.4333          & 88.4225     & 88.3077      & 73.8900          & \textbf{91.3180} & 91.0465                                                        \\
			KDDCup99                      & 99.9716      & \textbf{99.9768} & 99.9503     & 99.8541      & 99.9600          & 99.9737          & 99.9719                                                        \\
			Mnist\_a                      & 91.5614      & 84.4300          & 60.9343     & 28.7500      & 88.8300          & 93.1129          & \textbf{93.5343}                                               \\
			Nomao                         & 97.0985      & 97.2813          & 95.8741     & 93.6980      & 96.2000          & \textbf{97.5337} & 97.5018                                                        \\
			Vehicle                       & 85.0652      & 84.6856          & 84.8957     & 78.1057      & 85.4000          & \textbf{86.8271} & 86.7388                                                        \\
			20\_newsgroups                & 99.6534      & 99.7010          & 99.4957     & 98.7401      & 99.5600          & 99.7082          & \textbf{99.7132}                                               \\
			AGR\_a                        & 90.7030      & 93.0238          & 89.8923     & 93.0665      & 60.8900          & 94.7449          & \textbf{94.8459}                                               \\
			AGR\_g                        & 87.0745      & 89.4430          & 86.7219     & 90.4978      & 60.0800          & 91.5984          & \textbf{91.7029}                                               \\
			BNG\_tic-tac-toe              & 78.4103      & 77.1884          & 77.8565     & 75.3340      & 70.8400          & 78.9615          & \textbf{78.9920}                                               \\
			BNG\_vote                     & 96.9841      & 96.8628          & 96.9643     & 96.6232      & 96.6100          & \textbf{97.1886} & 97.1649                                                        \\
			BNG\_segment                  & 87.3781      & 86.9372          & 87.1968     & 85.9996      & 86.3600          & 87.5945 & \textbf{87.6018}                                                        \\
			HYPER                         & 85.2711      & 84.9455          & 87.3113     & 89.1766      & \textbf{91.8000} & 88.4881          & 88.8437                                                        \\
			RBF\_f                        & 73.7903      & 75.2759          & 63.4204     & 43.4631      & 61.7800          & \textbf{78.8835} & 77.9373                                                        \\
			RBF\_m                        & 86.2290      & 85.1323          & 84.8242     & 66.8997      & 82.9800          & 87.8207          & \textbf{87.9004}                                               \\
			RTG                           & 94.0765      & 90.7654          & 97.8457     & 94.6737      & 82.6200          & \textbf{98.0519} & 97.9177                                                        \\
			SEA\_a                        & 89.6332      & 88.2105          & 86.9402     & 88.9516      & 86.6100          & 89.7021          & \textbf{89.7040}                                               \\
			SEA\_g                        & 88.9488      & 87.4435          & 88.4956     & 88.4597      & 85.6900          & 89.0484          & \textbf{89.0599}                                               \\ \hline
			\textbf{Avg Rank Real}        & 4.11         & 3.22             & 5.33        & 6.56         & 5.44             & \textbf{1.67}    & \textbf{1.67}                                                  \\
			\textbf{Avg Rank Synt.}        & 3.91         & 4.91             & 4.73        & 4.91         & 6.18             & 1.91             & \textbf{1.45}                                                  \\
			\textbf{Avg Rank}             & 4.00            & 4.15             & 5.00           & 5.65         & 5.85             & 1.80             & \textbf{1.55}                                                   \\ \hline
		\end{tabular}
	}
\end{table}

\begin{figure*}[t!] 
	\centering
	\includegraphics[width=2\columnwidth]{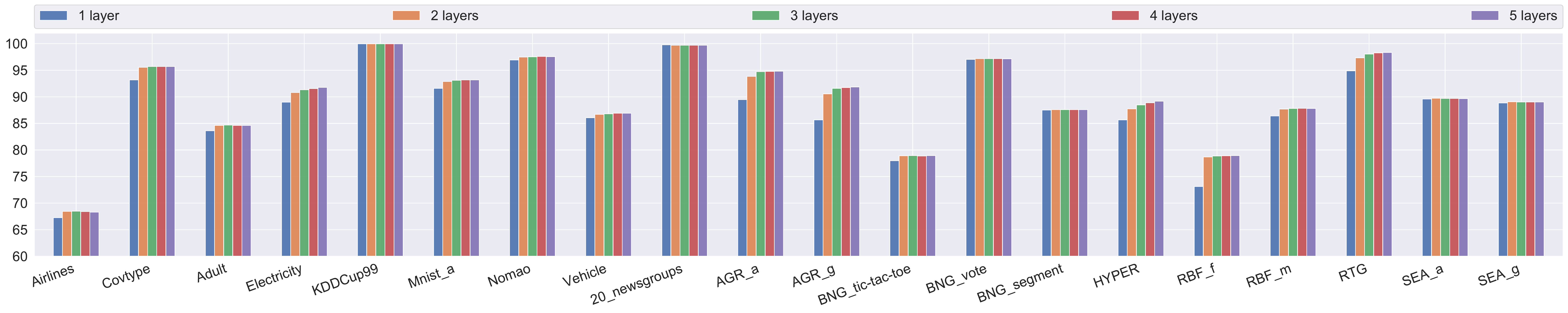}
	\caption{Test-then-train accuracy of SDF using different numbers of layers}
	\label{h3}
\end{figure*}

\subsection{Augmented Variable Uncertainty strategy}

First, we designed an experiment to confirm Proposition \ref{prop1} by examining the fraction of labeling requests of all active learning methods on the Electricity dataset when the labeling budget $ B>0.5 $. Figure \ref{h4} shows the results for $ B=0.7 $ and $ B=0.9 $. Clearly, in both cases, the labeling amount that VU and VRU request quickly converges to 0.5, which is consistent with our proposition. Note that we only considered VU in the proof, but it can be easily extended for VRU.

Figure \ref{h5} shows the comparisons of AVU against other active learning strategies on the Electricity and Airlines datasets given different values of the budget. When $ B\leq0.5 $, the proposed method yields the same result as the VU method, and the accuracy goes up when more budget is given. By contrast, in cases of $ B=0.7 $ and $ B=0.9 $, the performance of VU no longer increases, while the performance of AVU keeps rising. This observation demonstrates that it is beneficial to take full advantage of the given budget. In comparison to SS and VRU, AVU completely outperforms them in almost all cases. The only exception is the Electricity dataset with $ B=0.1 $, where SS yields higher accuracy than AVU.

\begin{figure}[t!] 
	\centering
	\includegraphics[width=\columnwidth]{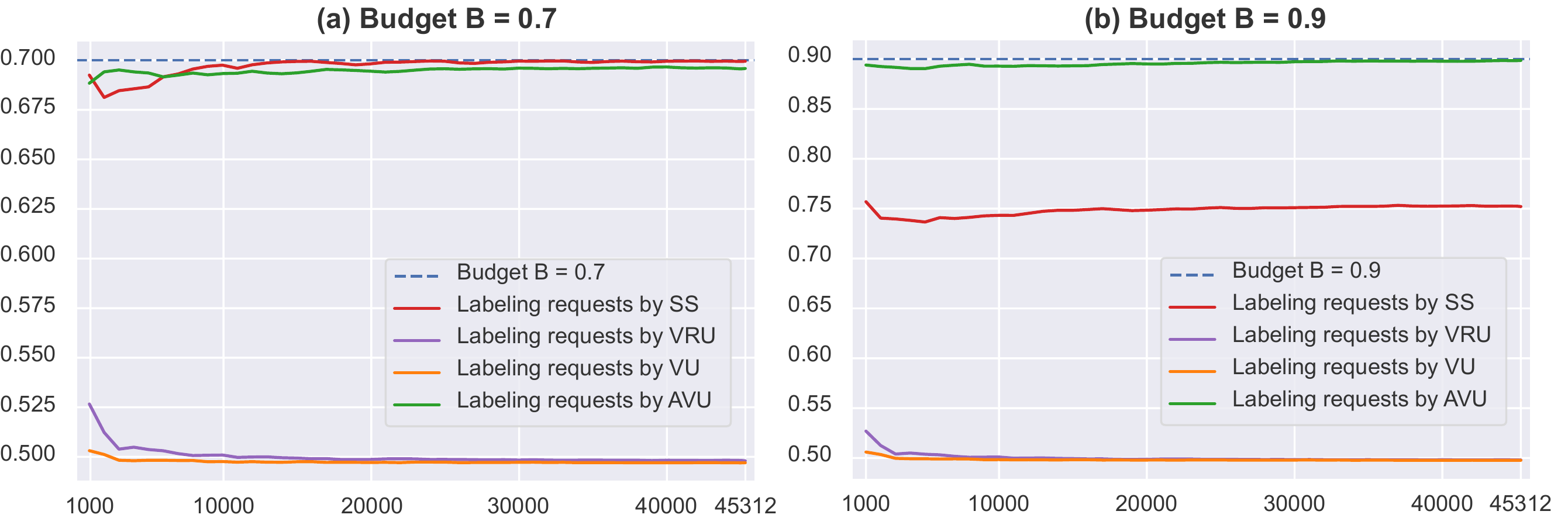}
	\caption{Electricity - Labeling costs over time}
	\label{h4}
\end{figure}

\begin{figure}[t!]
	\centering
	\includegraphics[width=\columnwidth]{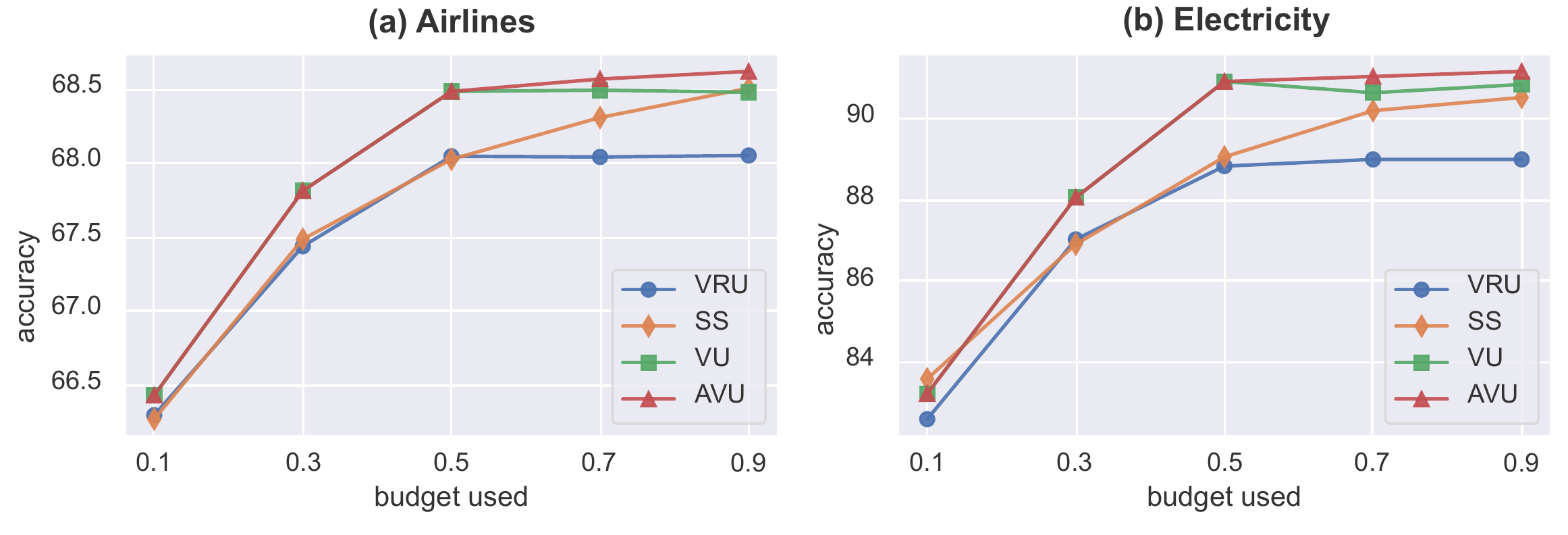}
	\caption{Accuracies given a budget. \textbf{a} Airlines. \textbf{b} Electricity}
	\label{h5}
\end{figure}

\subsection{Streaming Deep Forest vs. Others}
Table \ref{table_accuracy} shows the accuracy of SDF, SDF(B=0.7) and other algorithms on 20 datasets. Since some methods may perform better on synthetic data while not so well in general, we present both the average ranking for the real-world datasets (Avg Rank Real) and the average ranking for the synthetic datasets (Avg Rank Synt.) alongside the general average ranking for all datasets (Avg Rank). The result shows that SDF variants consistently rank first on almost all datasets (18/20) except for the KDDcup99 dataset and HYPER dataset, where they still yield reasonable performance. An interesting observation here is that SDF(B=0.7) achieves better average ranking than SDF though it queries only 70\% of the true labels for training. On real datasets, they both obtain the best average ranking (1.67), whereas SDF(B=0.7) performs slightly better than SDF on synthetic datasets.

To assess the statistical significance of the comparisons, we apply the Friedman test and the Nemenyi post-hoc test with the significance level $ \alpha=0.05 $ to evaluate multiple methods on multiple datasets \cite{demvsar2006statistical}. The Friedman test rejected the hypothesis that ``all methods perform equally". Figure \ref{nemenyi} illustrates the results of the post-hoc tests regarding the accuracy and the run time. In terms of accuracy, Figure \ref{nemenyi_a} shows that the proposed methods (SDF and SDF(B=0.7)) significantly outperform all the benchmark algorithms, while no significant difference has been found among ARF, SRP, LB, OSB, and ODL. Meanwhile, the run time of SDF is high in comparison to other methods, as shown in Figure \ref{nemenyi_b}, which is attributable to its multi-layer structure. Fortunately, by using active learning, SDF(B=0.7) performs much faster than SDF and obtains comparable run time to LB and SRP.

\begin{figure}[t!]
	\begin{subfigure}{.5\textwidth}
		\centering
		% include first image
		\includegraphics[width=.9\linewidth]{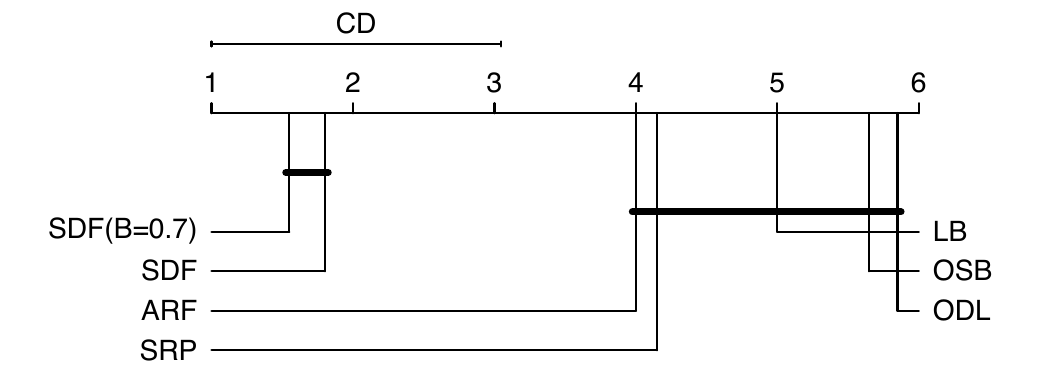}  
		\caption{}
		\label{nemenyi_a}
	\end{subfigure}
	\begin{subfigure}{.5\textwidth}
		\centering
		% include second image
		\includegraphics[width=.9\linewidth]{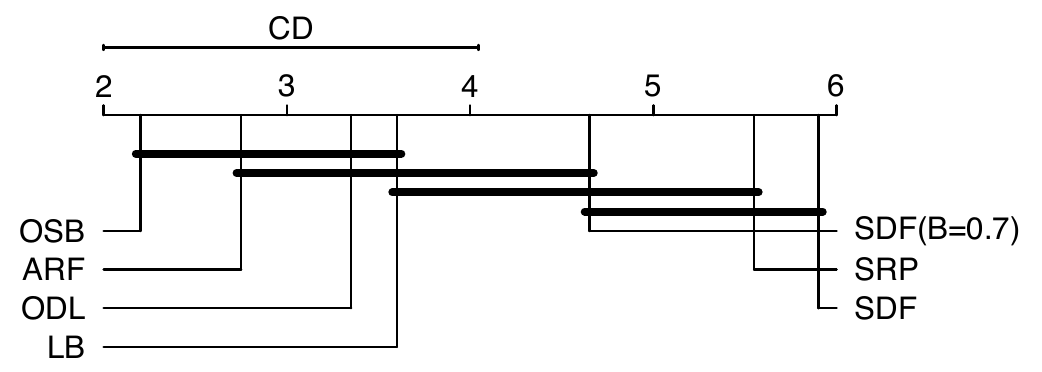}  
		\caption{}
		\label{nemenyi_b}
	\end{subfigure}
	\caption{Nemenyi test ($ \alpha = 0.05 $). \textbf{a} Accuracy. \textbf{b} Run time}
	\label{nemenyi}
\end{figure}

\subsection{Effect of hyper-parameters}

The only hyper-parameter of SDF apart from those of ARF base learner is the number of layers. Figure \ref{h3} shows the accuracy of SDF when we vary this hyper-parameter from 1 to 5. It is clear that SDF with only one layer performs much worse than that with more layers. In addition, there is an upward trend in almost all datasets, meaning that adding more layers tends to give better (or at least equal) accuracy. Therefore, we recommend using SDF with at least two layers when having low computing power; otherwise, use as many layers as the hardware can handle.

In the batch learning setting where the training time is unlimited, the classification model can be very deep with many layers. However, in the stream setting, the model is expected to process instances at least as fast as new instances are available. Thus, we only use up to five layers to align with our computation resources, but the proposed method can be directly extended to many more layers when more computing power is available.

\section{Conclusions} \label{concl}
In this work, we have adapted the gcForest model to the context of evolving data streams and proposed the Streaming Deep Forest. In particular, we exploited the cascade structure of gcForest to retain its representation learning ability, changed the base forest model at each layer to ARF, and employed an online training scheme to update SDF on the fly. We compared SDF to various state-of-the-art streaming classification methods over 20 datasets from both real-world applications and synthetic data generators. We also proposed an active online learning framework for evolving data streams, namely Augmented Variable Uncertainty. Our experiments showed that by following the AVU active learning strategy, SDF with only 70\% of the true labels significantly beats other benchmark methods trained with all the true labels.

In future work, we will study how to make SDF a deeper model while keeping the run time reasonable by considering sparse structures. When having a very deep model, an online weighted scheme for the layers can be employed to reduce the effort to tune the number of layers.

\vspace{14pt}
\begin{prop} \label{prop1}
	Given a data stream $ X=\{x_1,x_2,…,x_\infty\} $, a classifier $ L $, and a small positive number $ s $ (e.g. $ s=0.01 $). Let $ u_k $ be the certainty score of $ L $ on $ x_k $ which lies in the range $ [a,b]  (0\leq a < b) $, and $ \theta $ be a variable certainty threshold. Consider the following strategy:
	\begin{itemize}
		\item Initialize the certainty threshold $ \theta_1=b $
		\item For all $ k=1,2,... $
		\begin{itemize}
			\item If $ u_k<\theta_k $ then $ \theta_{k+1}=\theta_k (1-s) $
			\item If $ u_k\geq \theta_k $ then $ \theta_{k+1}=\theta_k (1+s) $
		\end{itemize}
	\end{itemize}
	Let $ \bar{\theta}_k $ be the expectation of the threshold at the $ k $-th instance. If we follow the above strategy, then the probability $ P(u_k<\bar{\theta}_k) $ converges to 0.5 when $ k $ approaches infinity.
\end{prop}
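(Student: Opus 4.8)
The plan is to track the \emph{expected} threshold $\bar\theta_k=E[\theta_k]$ and to show that it settles precisely at the level below which half of the certainty mass lies. Consistent with the per-concept i.i.d.\ assumption of the problem setting, I would treat the scores $u_k$ as independent draws from a fixed distribution with c.d.f.\ $F$ supported on $[a,b]$, so that conditioning on the history that determines $\theta_k$ gives $P(u_k<\theta_k\mid\theta_k)=F(\theta_k)$.

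The key manipulation is to rewrite the multiplicative update additively. Since the rule scales $\theta_k$ by $(1-s)$ on a query and by $(1+s)$ otherwise,
\[
\theta_{k+1}=\theta_k\bigl[(1+s)-2s\,\mathbf{1}(u_k<\theta_k)\bigr],
\]
and taking expectations, using conditional independence of $u_k$ and $\theta_k$, yields
\[
\bar\theta_{k+1}=\bar\theta_k+s\,E\bigl[\theta_k\bigl(1-2F(\theta_k)\bigr)\bigr].
\]
The sign of the drift is therefore controlled by whether $F(\theta_k)$ exceeds $1/2$: a threshold so high that more than half the scores fall below it contracts in expectation, while a lower threshold expands. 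I would next observe that the chain cannot escape a bounded interval, because $u_k\in[a,b]$ forces a query (hence a decrease) whenever $\theta_k>b$ and forbids one (hence an increase) whenever $\theta_k<a$, so $\bar\theta_k$ stays in $[a(1-s),b(1+s)]$. Boundedness together with the restoring drift drives $\bar\theta_k$ to a limit $\theta^\ast$ at which the drift vanishes, i.e.\ $E[\theta(1-2F(\theta))]=0$; under the mean-field decoupling $E[\theta F(\theta)]\approx\bar\theta\,F(\bar\theta)$ this reads $F(\theta^\ast)=1/2$, which is exactly $P(u_k<\bar\theta_k)\to 0.5$.

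The main obstacle is the correlation between $\theta_k$ and the event $\{u_k<\theta_k\}$: the exact stationarity condition involves $E[\theta_k F(\theta_k)]$ rather than $\bar\theta_k\,F(\bar\theta_k)$, so the clean value $1/2$ only emerges after decoupling, which I would justify by arguing that a small step $s$ keeps the stationary threshold tightly concentrated about $\theta^\ast$. It is worth stressing that $0.5$ is exact for the \emph{arithmetic} mean: the stationarity condition $(1-s)p+(1+s)(1-p)=1$ solves to $p=1/2$ with no approximation, whereas the \emph{logarithmic} drift condition $F\log(1-s)+(1-F)\log(1+s)=0$ would give $F^\ast=\log(1+s)/\log\frac{1+s}{1-s}=\tfrac12+O(s)$; the two agree to first order, which is ample at $s=0.01$. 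A final gap to fill is upgrading ``bounded plus restoring drift'' into genuine convergence of $\bar\theta_k$ to $\theta^\ast$, for which I would lean on the confinement and the contractive nature of the drift rather than a full ergodic-theorem treatment.
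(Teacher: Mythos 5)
Your proposal follows the same core idea as the paper's proof---take the expectation of the multiplicative threshold update and locate the fixed point where the query probability equals $1/2$---but it differs in three respects worth comparing. First, you work with a general c.d.f.\ $F$ on $[a,b]$, whereas the paper simply assumes $u\sim \mathrm{uniform}(a,b)$, so that $P(u_k<\bar{\theta}_k)=(\bar{\theta}_k-a)/(b-a)$ and the claim reduces to showing $\bar{\theta}_k\to(a+b)/2$ via an explicit scalar recursion; your version is more general but correspondingly harder to close. Second, the decoupling you flag as the main obstacle---that the exact recursion involves $E[\theta_k F(\theta_k)]$ (in the uniform case, $E[\theta_k^2]$) rather than $\bar{\theta}_k F(\bar{\theta}_k)$---is performed silently in the paper: its recursion for $\bar{\theta}_{k+1}$ already substitutes $\bar{\theta}_k$ for $\theta_k$ inside the probabilities, so you have identified a gap that the published proof shares rather than one it resolves; being explicit about it is a strength of your write-up, though neither you nor the paper actually justifies the step. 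Third, where you leave ``bounded plus restoring drift implies convergence'' as an open gap, the paper closes it (for its decoupled uniform recursion) with an elementary monotone-convergence argument that you should adopt: an induction shows $\bar{\theta}_k\geq(a+b)/2$ for all $k$, the key inequality being $[2\bar{\theta}_k-(a+b)][2s\bar{\theta}_k-(b-a)]\leq 0$, which holds because $s$ is small; the same lower bound then shows $\{\bar{\theta}_k\}$ is decreasing, hence convergent as a monotone bounded sequence, and passing to the limit in the recursion forces $l=(a+b)/2$. Your aside that the arithmetic-mean stationarity condition gives exactly $p=1/2$ while the logarithmic drift gives only $1/2+O(s)$ is a nice observation absent from the paper, but it does not substitute for that convergence argument.
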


%\noindent\textbf{Proposition \label{prop1}{1}}

\begin{proof}
	Assume the certainty score $ u $ is uniformly distributed from $ a $ to $ b $: $ u \sim uniform(a,b) $, which means $ P(u_k < \bar{\theta}_k) = \frac{\bar{\theta}_k - a}{b-a}, \text{and } P(u_k \geq \bar{\theta}_k) = \frac{b - \bar{\theta}_k}{b - a} $. Hence, the expectation $ \bar{\theta}_{k + 1} $ is:
	\begin{multline}\label{theta k+1}
	\bar{\theta}_{k + 1} 
	= P(u_k < \bar{\theta}_k)\times \bar{\theta}_k(1 - s) + P(u_k \geq \bar{\theta}_k) \times \bar{\theta}_k(1 + s)
	\\= \frac{\bar{\theta}_k - a}{b - a}\times \bar{\theta}_k (1 - s) + \frac{b - \bar{\theta}_k}{b - a}\times \bar{\theta}_k (1 + s) \hspace{5px}
	\end{multline}
	To prove $ P(u_k < \bar{\theta}_k)  \xrightarrow[k\to\infty]{} 0.5$, we will prove that $   \bar{\theta}_k \xrightarrow[k\to\infty]{} \frac{a+b}{2}$.
	
	First, we use induction to show that $ \bar{\theta}_k \geq \frac{a + b}{2} $ for all $ k \geq 1 $. Assume that $ \bar{\theta}_k \geq \frac{a+b}{2} $. From \eqref{theta k+1}, the inequality $ \bar{\theta}_{k + 1} \geq \frac{a+b}{2} $ is equivalent to:
	\begin{equation}
	[2\bar{\theta}_k - (a+b)][2s\bar{\theta}_k - (b-a)] \leq 0
	\end{equation}
	which is satisfied due to the small value of $ s $ and the induction assumption. Hence, the inequality $ \bar{\theta}_{k + 1} \geq \frac{a+b}{2} $ is also satisfied. By induction, we have:
	\begin{equation} \label{bichan}
	\bar{\theta}_k \geq \frac{a + b}{2} \text{ for all } k \geq 1 
	\end{equation}
	
	Second, we show that $ \{ \bar{\theta}_k \} $ is a decreasing sequence, or $ \bar{\theta}_{k + 1} \leq \bar{\theta}_k $ for all $ k=1,2,... $. Substituting \eqref{theta k+1} to this inequality, we have:
	\begin{equation} 
	s\bar{\theta}_k[-2\bar{\theta}_k + (a + b)] \leq 0
	\end{equation}
	which holds due to \eqref{bichan}. Consequently, the inequality $ \bar{\theta}_{k + 1} \leq \bar{\theta}_k $ holds. Combining with \eqref{bichan}, we have $ \{\bar{\theta}_k\} $ is a decreasing and bounded below sequence. Therefore, it is converging. Now, let its limit be $ l = \lim_{k \to \infty} \bar{\theta}_k$. When $ k $ approaches $ \infty $, we have:
	\begin{equation*}
	l = \bar{\theta}_{k + 1}
	= \frac{(l-a)l(1-s)}{b-a} + \frac{(b - l)l(1+s)}{b-a}
	\end{equation*}
	which is equivalent to $ l = \frac{a + b}{2} $, or $ \lim_{k \to \infty} \bar{\theta}_k = \frac{a + b}{2} $. Therefore, $ P(u_k < \bar{\theta}_k) \xrightarrow[k \to \infty]{} 0.5$.
\end{proof}
%\textit{Proof.}  $ \qedsymbol $

%% The file named.bst is a bibliography style file for BibTeX 0.99c
\bibliographystyle{named}
\bibliography{sdf}

%\appendix
%\vspace{10pt}

\end{document}